\documentclass[runningheads]{llncs}

\usepackage{graphicx}
\usepackage{amsmath,amssymb}
\usepackage{booktabs}
\usepackage{multirow}
\usepackage{xcolor}
\usepackage{algorithm}
\usepackage{algorithmic}
\usepackage{hyperref}
\usepackage{microtype}
\usepackage{cite}
\usepackage{lmodern}
\usepackage{tikz}
\usepackage[capitalize]{cleveref}
\usetikzlibrary{spy}

\hypersetup{hidelinks}

\newcommand{\method}{VEDAL}

\newcommand{\reals}{\mathbb{R}}
\newcommand{\expect}{\mathbb{E}}
\newcommand{\kl}{\mathrm{KL}}
\newcommand{\loss}{\mathcal{L}}

\newcommand{\pos}{\boldsymbol{\mu}}
\newcommand{\cov}{\boldsymbol{\Sigma}}

\begin{document}

\title{VEDAL: Variational Error-Driven Asynchronous Learning for 3D Gaussian Splatting Pruning}
\titlerunning{VEDAL: Variational Error-Driven Asynchronous Learning for 3DGS Pruning}

\author{Aoduo Li\inst{1} \and Jiancheng Li\inst{1} \and Huan Ye\inst{1} \and Hongjian Xu\inst{1} \and Shiting Wu\inst{2} \and Xiujun Zhang\inst{3} \and Zimeng Li\inst{3}\orcidID{0000-0003-2798-3134}\thanks{Corresponding authors.} \and Xuhang Chen\inst{4}\orcidID{0000-0001-6000-3914}\protect\footnotemark[1]}
\authorrunning{A. Li et al.}
\institute{Guangdong University of Technology
\and Huizhou Boluo Power Supply Bureau, Guangdong Power Grid Co., Ltd.
\and Shenzhen Polytechnic University
\and School of Computer Science and Engineering, Huizhou University\\ \email{li\_zimeng@szpu.edu.cn, xuhangc@hzu.edu.cn}}

\maketitle

\begin{abstract}
3D Gaussian Splatting (3DGS) achieves remarkable novel view synthesis quality with real-time rendering, yet suffers from excessive memory consumption due to millions of Gaussian primitives. Existing pruning methods rely on heuristic importance scores or synchronous batch updates, leading to suboptimal compression and training instability. We propose \method{}, a variational framework that formulates Gaussian pruning through free-energy minimization. Our approach combines (1) a \textit{prediction-error gating mechanism} that asynchronously activates pruning only after per-Gaussian importance estimates stabilize, and (2) a \textit{variational uncertainty head} that models retention decisions as latent Bernoulli variables with a sparsity-inducing prior. The resulting objective balances reconstruction fidelity against model complexity and makes the pruning schedule depend on convergence rather than on a fixed global iteration count. Experiments on Mip-NeRF~360, Tanks\&Temples, and Deep Blending show that \method{} achieves $5.2\times$ compression with only $0.31$~dB PSNR drop and consistent, albeit modest, gains over the evaluated pruning baselines, while maintaining real-time rendering at $185$~FPS. The code is available at \href{https://github.com/AsakaTigar/VEDAL}{https://github.com/AsakaTigar/VEDAL}.

\keywords{3D Gaussian Splatting \and Neural Rendering \and Model Pruning \and Variational Inference \and Scene Compression}
\end{abstract}

\section{Introduction}\label{sec:intro}

3D Gaussian Splatting (3DGS)~\cite{kerbl20233dgs} achieves real-time novel view synthesis via tile-based alpha-blending of anisotropic Gaussian primitives.
However, adaptive densification produces millions of Gaussians (1--4\,GB per scene), limiting deployment on memory-constrained devices. Many are redundant, motivating principled compression. This is particularly critical for 3D applications~\cite{li2022monocular,yan2024hierarchical,li2025motionrefinenet,song2024local,liu2023explicit,li2023cee,li2025adaptive,liu2024depth,zhang2022correction,jiang2020geometry}. More broadly, recent visual intelligence research has also expanded to multimodal reasoning evaluation, motivating models and representations that are not only accurate but also efficient and reliable~\cite{shi2026mmerror}.

\begin{figure}[ht]
\centering
\includegraphics[width=\textwidth]{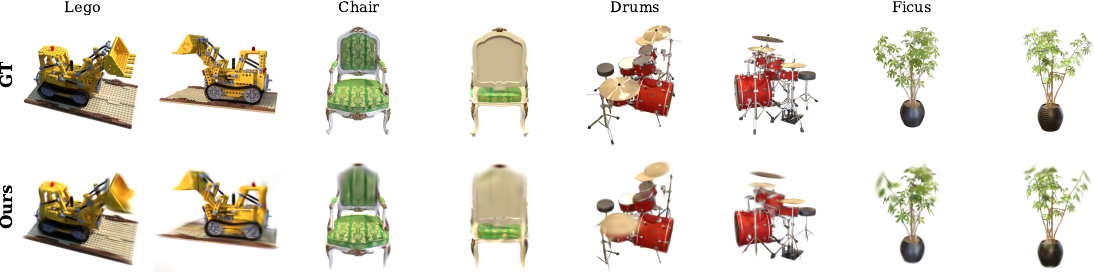}
\caption{\textbf{\method{} Overview.}
(a)~Synchronous pruning risks premature removal.
(b)~\method{} defers KL pruning until each Gaussian's importance stabilises, then decides retention via an ELBO-optimised variational head.
(c)~\method{} occupies a favorable position on the compression--quality Pareto front.}
\label{fig:teaser}
\end{figure}

Existing methods~\cite{fan2023lightgaussian,lee2024compact,fang2024minisplatting,hanson2024pup3dgs,liu2024maskgaussian,liu2023coordfill,liu2025explicit2,liu2024dh,liu2024forgeryttt,jiang2021deep} use heuristic importance scores and synchronous removal, ignoring that \emph{different Gaussians converge at vastly different rates}.
Primitives in textured regions stabilize quickly; those in specular or occluded areas need far more iterations.
Unlike neural-network pruning~\cite{molchanov2017variational,louizos2017bayesian}, where all weights see every sample, each Gaussian receives view-dependent gradients, creating \emph{heterogeneous convergence}.
We introduce \textbf{convergence-conditional variational inference}, activating the ELBO's KL regularizer per-variable only after its reconstruction signal stabilizes. \method{} comprises:

\begin{enumerate}
\item A \textbf{prediction-error gate} maintains per-Gaussian EMA estimates of leave-one-out importance and asynchronously activates pruning eligibility once each estimate converges (\cref{sec:gate}).
\item A \textbf{variational uncertainty head} models the retention decision for each eligible Gaussian as a Bernoulli latent variable and optimizes an ELBO with a sparsity-inducing KL prior (\cref{sec:var_head}).
\end{enumerate}

We derive a closed-form \emph{retention–error correspondence} (\cref{prop:retention}): optimal retention is a sigmoid of the importance-to-KL-weight ratio, motivating convergence-aware gating.
Experiments on Mip-NeRF~360~\cite{barron2022mipnerf360}, Tanks\&Temples~\cite{knapitsch2017tanks}, and Deep Blending~\cite{hedman2018deep} show $5.2\times$ compression with $0.31$\,dB PSNR drop and consistent improvements over the evaluated pruning baselines at 185\,FPS.

\smallskip\noindent\textbf{Contributions:}
(1)~Convergence-conditional variational inference for 3DGS pruning with a retention--error theorem;
(2)~an asynchronous prediction-error gate;
(3)~a variational uncertainty head for learned retention;
(4)~consistent compression--quality improvements over the evaluated baselines on three benchmarks.

\section{Related Work}\label{sec:related}

\textbf{3D Gaussian Splatting.}\;
3DGS~\cite{kerbl20233dgs} renders via anisotropic Gaussian primitives with spherical harmonics.
Extensions address surface quality~\cite{huang20242dgs}, anchors~\cite{lu2024scaffold}, anti-aliasing~\cite{yu2024mipsplatting}, dynamics~\cite{wu20244dgaussians,yang2024deformable3dgs,tu2026speede3dgs}, and language-driven scene editing~\cite{chen2026transsplat}; yet, densification produces millions of primitives (${\sim}236$\,B each), creating a memory bottleneck.

\textbf{Pruning.}\;
LightGaussian~\cite{fan2023lightgaussian} ranks by significance; Compact3D~\cite{lee2024compact} learns binary masks; Mini-Splatting~\cite{fang2024minisplatting} constrains count during densification; PUP~3D-GS~\cite{hanson2024pup3dgs} uses Hessian sensitivity; MaskGaussian~\cite{liu2024maskgaussian} learns probabilistic masks---closest to our Bernoulli posteriors but without convergence-aware gating.
More recent work broadens the landscape. SVR-GS~\cite{taghipour2025svrgs} introduces spatially variant regularisation for probabilistic masks, Gradient-Driven Natural Selection~\cite{deng2025gradientgs} uses opacity regularisation as a learnable survival signal, and Clean-GS~\cite{mishra2026cleange} removes floaters using sparse semantic masks. These methods reinforce the importance of non-uniform sparsity, but they do not explicitly address \emph{when} the pruning signal of each Gaussian becomes reliable enough to activate variational pressure. Our focus is therefore complementary: we keep the pruning decision learnable while conditioning it on convergence.

\textbf{Attribute compression.}\;
Quantization~\cite{navaneet2024compact3dgs,chen2024hac,girish2024eagles}, structural~\cite{lu2024scaffold,ren2024octreegs}, training-free codebook methods such as FlexGaussian~\cite{tian2025flexgaussian}, and optimization-free feedforward compression such as FCGS~\cite{chen2025fastff} reduce per-Gaussian storage rather than count; also, there are many other methods \cite{zhang1,zhang2,zhang3,zhang4}. These directions are largely orthogonal to pruning-based count reduction, which is why we study their combination with \method{} in \cref{sec:combination}. Generative and editing-oriented efficiency methods~\cite{tang2024dreamgaussian,lu2026chordedit} are also complementary.

\textbf{Variational pruning.}\;
Variational Dropout~\cite{molchanov2017variational} and Bayesian Compression~\cite{louizos2017bayesian} prune weights via ELBO, assuming i.i.d.\ gradients. In 3DGS, view-dependent gradients create heterogeneous convergence; our \emph{convergence-conditional} extension activates KL per-variable only after its signal stabilises (\cref{prop:retention}).

\section{Method}\label{sec:method}

\begin{figure}[ht]
\centering
\includegraphics[width=\textwidth]{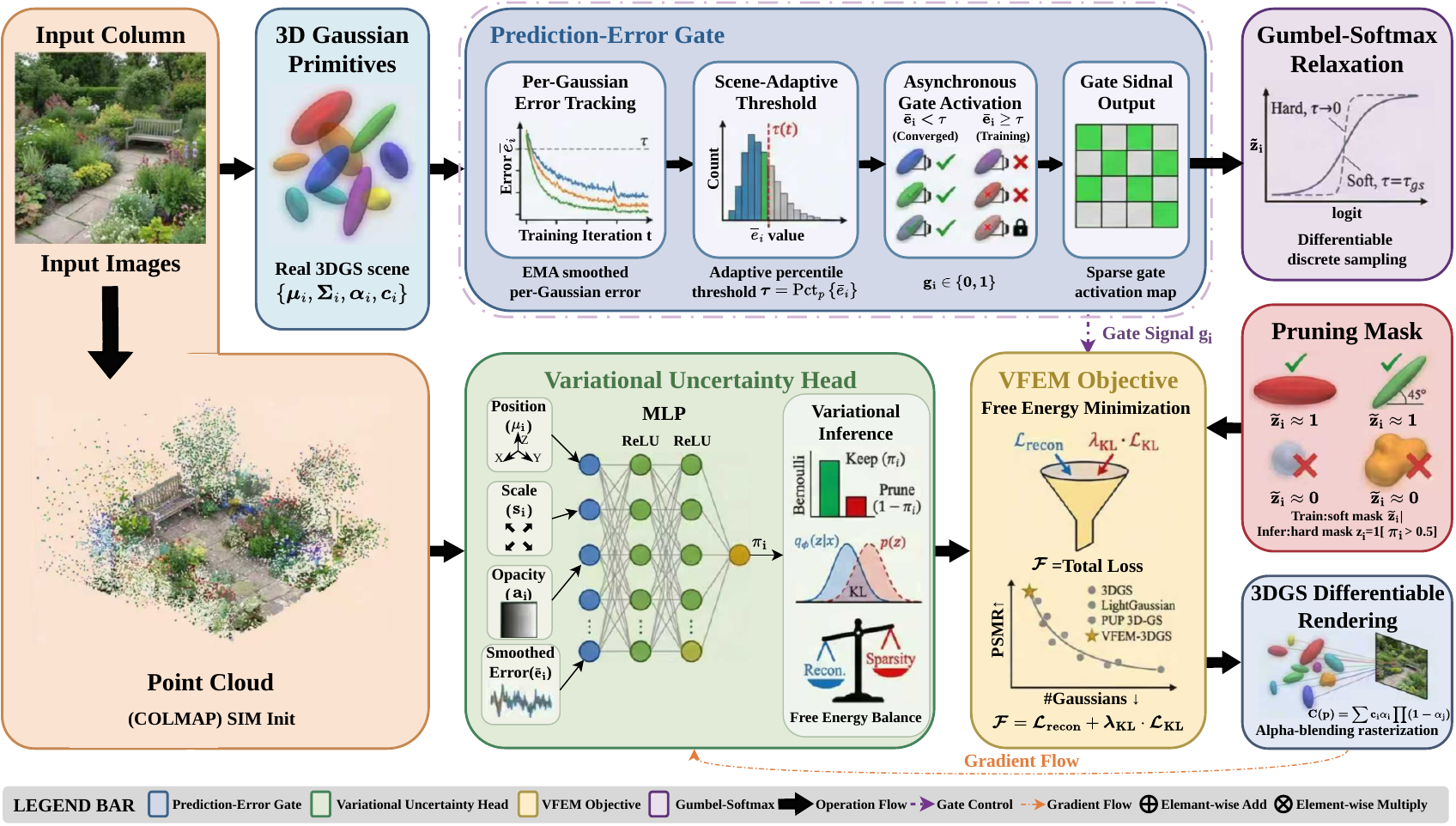}
\caption{\textbf{\method{} architecture.}
The prediction-error gate activates KL pressure only after a Gaussian's EMA importance stabilizes.
The variational head predicts retention probabilities for all Gaussians, and Binary Concrete samples modulate opacity during training.
After optimization, the head is discarded and Gaussians with $\pi_i{<}0.5$ are pruned.}
\label{fig:architecture}
\end{figure}

\subsection{Preliminaries}\label{sec:preliminaries}

3DGS~\cite{kerbl20233dgs} models a scene as $N$ anisotropic Gaussians $\{G_i\}_{i=1}^N$, each with position $\pos_i\!\in\!\reals^3$, covariance $\cov_i\!=\!R_iS_iS_i^\top R_i^\top$ (rotation $R_i$, diagonal scale $S_i\!=\!\mathrm{diag}(s_{i,1},s_{i,2},s_{i,3})$), opacity $\alpha_i$, and SH colour coefficients $\mathbf{c}_i$.
Rendering uses depth-sorted alpha-blending:
$C(\mathbf{p}) = \sum_{i\in\mathcal{N}(\mathbf{p})} c_i\,\alpha_i \prod_{j<i}(1{-}\alpha_j)$.
Training minimises $\loss_{\text{recon}} = (1{-}\lambda)\loss_1 + \lambda\,\loss_{\text{D-SSIM}}$.

\subsection{Convergence-Conditional Variational Inference}\label{sec:formulation}

We introduce latent binary variables $\mathbf{z}=\{z_i\}_{i=1}^N$ ($z_i{=}1$: retain; $z_i{=}0$: prune).
Given training images $\mathcal{D}$, the marginal likelihood involves a combinatorial sum:
\begin{equation}
  \log p(\mathcal{D}|\mathcal{G})
  = \log\!\sum_{\mathbf{z}} p(\mathcal{D}|\mathcal{G},\mathbf{z})\,p(\mathbf{z}).
\end{equation}
We optimise the evidence lower bound (ELBO)~\cite{blei2017variational}:
\begin{equation}
  \log p(\mathcal{D}|\mathcal{G})
  \;\geq\;
  \underbrace{\expect_{q_\phi(\mathbf{z})}\!\bigl[\log p(\mathcal{D}|\mathcal{G},\mathbf{z})\bigr]}_{\text{reconstruction}}
  \;-\;
  \underbrace{\kl\!\bigl(q_\phi(\mathbf{z})\|p(\mathbf{z})\bigr)}_{\text{complexity}}.
  \label{eq:elbo}
\end{equation}

\textbf{Factorised posterior.}\;
$q_\phi(\mathbf{z})=\prod_{i=1}^N \mathrm{Bernoulli}(z_i;\pi_i)$,
where $\pi_i=\sigma(f_\phi(G_i))$ is produced by the variational head (\cref{sec:var_head}) for \emph{all} $N$ Gaussians.

\textbf{Sparsity prior.}\;
$p(\mathbf{z})=\prod_{i=1}^N \mathrm{Bernoulli}(z_i;\rho)$ with $\rho{<}0.5$, yielding per-Gaussian KL:
\begin{equation}
  D_i \;=\; \pi_i\log\frac{\pi_i}{\rho} + (1{-}\pi_i)\log\frac{1{-}\pi_i}{1{-}\rho}.
  \label{eq:kl_per}
\end{equation}

\textbf{Convergence conditioning.}\;
Standard variational pruning applies KL to all variables simultaneously, but in 3DGS each Gaussian's importance is reliable only after sufficient view coverage.
We condition KL activation on a convergence predicate $g_i^{(t)}\!\in\!\{0,1\}$ (\cref{sec:gate}):
\begin{equation}
  \loss_{\text{KL}}^{(t)}
  = \frac{1}{|\mathcal{A}^{(t)}|}
    \sum_{i:\,g_i^{(t)}=1} D_i,
  \qquad
  \mathcal{A}^{(t)}=\{i:g_i^{(t)}{=}1\}.
  \label{eq:gated_kl}
\end{equation}
\textbf{Retention--error correspondence.}\;
\begin{proposition}\label{prop:retention}
Let $\delta_i = \loss_{\text{recon}}(\mathcal{G}{\setminus}G_i) - \loss_{\text{recon}}(\mathcal{G})$. Under the gated ELBO with KL weight $\lambda_{\text{KL}}$ and prior $\rho$, the optimal retention probability satisfies $\pi_i^* = \sigma\!\bigl(\delta_i/\lambda_{\text{KL}} + \log(\rho/(1{-}\rho))\bigr)$.
\end{proposition}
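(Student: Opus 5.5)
The plan is to reduce the gated objective to a scalar problem in each $\pi_i$ and solve its stationarity condition in closed form. Write the training loss as the negative gated ELBO,
\begin{equation*}
\loss(\boldsymbol{\pi}) = \expect_{q_\phi(\mathbf{z})}\bigl[\loss_{\text{recon}}(\mathcal{G},\mathbf{z})\bigr] + \lambda_{\text{KL}} \sum_{i:\,g_i=1} D_i ,
\end{equation*}
where $\loss_{\text{recon}}$ plays the role of $-\log p(\mathcal{D}|\mathcal{G},\mathbf{z})$. The normalisation $1/|\mathcal{A}^{(t)}|$ in \cref{eq:gated_kl} is absorbed into $\lambda_{\text{KL}}$. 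Only Gaussians with $g_i=1$ are considered, since for the others the KL term is absent and the optimum is trivially $\pi_i=1$ whenever $\delta_i>0$. The head is also treated as expressive enough that each $\pi_i$ can be optimised independently; this is the usual amortisation-gap-free assumption and should be stated explicitly.

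\textbf{Step 1 (linearising the reconstruction term in $\pi_i$).} Fix all other variables and condition on $z_i$. Under the factorised Bernoulli posterior,
\begin{equation*}
\expect_q[\loss_{\text{recon}}] = \pi_i\,\expect_{q_{-i}}[\loss_{\text{recon}}\mid z_i=1] + (1-\pi_i)\,\expect_{q_{-i}}[\loss_{\text{recon}}\mid z_i=0].
\end{equation*}
This expression is exactly affine in $\pi_i$, with slope $-\bigl(\expect[\loss\mid z_i=0]-\expect[\loss\mid z_i=1]\bigr)$. The key modelling step is to identify this conditional gap with the leave-one-out quantity $\delta_i$. That identification holds exactly if the remaining Gaussians are evaluated at their retained configuration (all $z_j=1$ for $j\neq i$), which corresponds to a mean-field evaluation near convergence. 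I will state it as an assumption, or as a first-order approximation in which interactions between removals are neglected. The justification is that the gate only activates once the EMA estimate of $\delta_i$ has stabilised, so $\delta_i$ is the reliable leave-one-out signal at that point.

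\textbf{Step 2 (stationarity).} The per-Gaussian objective becomes $h(\pi_i) = -\pi_i\delta_i + \lambda_{\text{KL}}D_i(\pi_i) + \text{const}$. From \cref{eq:kl_per},
\begin{equation*}
\frac{\partial D_i}{\partial \pi_i} = \log\frac{\pi_i}{1-\pi_i} - \log\frac{\rho}{1-\rho}.
\end{equation*}
Setting $h'(\pi_i)=0$ gives
\begin{equation*}
\operatorname{logit}(\pi_i) = \frac{\delta_i}{\lambda_{\text{KL}}} + \log\frac{\rho}{1-\rho},
\end{equation*}
and inverting the logit yields the claimed sigmoid form.

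\textbf{Step 3 (the point is the minimiser).} Since $D_i$ is strictly convex on $(0,1)$, with second derivative $1/(\pi_i(1-\pi_i))$, and the reconstruction term is affine, $h$ is strictly convex. Moreover $h'$ tends to $\mp\infty$ at the endpoints. Hence the stationary point is the unique interior global minimiser, which is the optimum $\pi_i^*$.

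The main obstacle is Step 1, namely justifying that the conditional loss gap equals $\delta_i$ despite the coupling introduced by alpha-blending and by the stochastic $z_j$ of other Gaussians. I would handle it by stating the proposition under a mean-field or first-order decoupling assumption, and note that the result is exact when all other Gaussians are retained.
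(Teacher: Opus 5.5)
Your proposal is correct and follows essentially the same route as the paper's proof sketch: reduce the gated negative ELBO to a per-Gaussian function $F_i(\pi_i)$, differentiate using $\partial D_i/\partial\pi_i = \log\frac{\pi_i}{1-\pi_i} - \log\frac{\rho}{1-\rho}$, set the derivative to zero, and invert the logit, with convexity guaranteeing the optimum. Your explicit decoupling assumption, which identifies the conditional loss gap with $\delta_i$, is the same independence assumption (ignoring transmittance coupling) that the paper states after the proposition, and your strict-convexity and boundary argument makes the uniqueness of the minimiser fully explicit.
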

\vspace{-2pt}
\emph{Proof sketch.} Differentiating the per-Gaussian negative ELBO $F_i(\pi_i)$ and solving $\partial F_i/\partial\pi_i{=}0$ (convex in $\pi_i$) yields the sigmoid form. Full proof in supplementary.

This sigmoid reveals that when $\delta_i$ is noisy (pre-convergence), retention oscillates. Activating KL only \emph{after} $\delta_i$ converges ensures stable decisions, motivating the gate below.

\textbf{Coupling assumption.}\;
\Cref{prop:retention} treats $F_i(\pi_i)$ as independent, ignoring transmittance coupling.
We systematically quantify this across 20 random $k{=}50$ neighbourhoods per scene.
The additive $\sum\delta_i$ overestimates true loss by $7.8{\pm}2.1\%$ (Lego), $9.2{\pm}2.8\%$ (Garden, higher depth complexity), and $6.5{\pm}1.9\%$ (Chair, sparse).
Overestimation correlates with mean opacity overlap ($r{=}0.74$); worst case is $14\%$ in dense specular clusters.
These measurements indicate that the additive approximation is conservative rather than exact. We therefore interpret \cref{prop:retention} as a first-order analysis of pruning pressure, not as a claim of exact Gaussian independence. In practice, the overestimation shifts the gate toward delaying pruning instead of accelerating it, which is consistent with the low gate-reversal and co-pruning rates reported in \cref{sec:analysis}.

\subsection{Prediction-Error Gate}\label{sec:gate}

For each $G_i$, we maintain an EMA of its leave-one-out importance:
\begin{equation}
  \bar{e}_i^{(t)} = \beta\,\bar{e}_i^{(t-1)} + (1{-}\beta)\,e_i^{(t)},
  \qquad \beta=0.99,
  \label{eq:ema}
\end{equation}
where $e_i^{(t)}$ is the per-pixel $\loss_1{+}\lambda\loss_{\text{D-SSIM}}$ change when $G_i$'s opacity is set to zero in the cached tile buffer (SH colours frozen, evaluated over all pixels covered by $G_i$'s projected footprint).

\textbf{Efficient approximation.}\;
Exact leave-one-out requires $O(N)$ renders.
We cache $c_i\alpha_i T_i$ ($T_i{=}\prod_{j<i}(1{-}\alpha_j)$) from the tile rasteriser and use a first-order approximation (ignoring downstream transmittance correction), refreshed every $\Delta_e{=}100$ iterations at ${\leq}3\%$ overhead.
\cref{tab:secondary_analysis} (right) validates this approximation across scenes and training stages: Spearman $r_s{\geq}0.91$ at all checkpoints, with gate-decision agreement ${\geq}96\%$; replacing the approximation with exact leave-one-out changes final PSNR by ${<}0.03$\,dB.

\textbf{Gating.}\;
The adaptive threshold $\tau^{(t)} = \mathrm{Percentile}_p(\{\bar{e}_i^{(t)}\})$ with $p{=}0.3$ caps the eligible fraction. The gate $g_i^{(t)} = \mathbb{1}[\bar{e}_i^{(t)} < \tau^{(t)}] \cdot \mathbb{1}[t > t_{\text{warmup}}]$ ($t_{\text{warmup}}{=}3000$) activates asynchronously: easy-region Gaussians become eligible early; specular/occluded ones stay protected.
Eligible fractions (5K$\to$25K): Lego $12{\to}31\%$, Garden $9{\to}28\%$, Bicycle $7{\to}26\%$, Kitchen $11{\to}30\%$; gate reversal (a Gaussian crossing $\tau$ back above after dropping below) is ${<}2\%$ in all cases.
The gate is most conservative for highly translucent Gaussians ($\alpha_i{<}0.1$), which have noisy $e_i$ and rarely become eligible before 15K; for extreme specularities the EMA stabilises around 20K.
The consistent pattern confirms defaults ($p{=}0.3$, $t_{\text{warmup}}{=}3000$) generalise without per-scene tuning.

\begin{table}[ht]
	\centering
	\caption{\textbf{Mip-NeRF~360.} Mean$\pm$std over 3 seeds. Ratio and size vs.\ unpruned 3DGS (3.25M, 734\,MB). $^\dagger$Paired $t$-test vs.\ PUP~3D-GS: $p{=}0.03$. \textbf{Bold}: best; \underline{underline}: second.}
	\label{tab:mipnerf360}
	\begin{tabular}{lccccccc}
		\toprule
		Method & PSNR$\uparrow$ & SSIM$\uparrow$ & LPIPS$\downarrow$ & \#G\,(M) & Ratio & MB & FPS \\
		\midrule
		3DGS~\cite{kerbl20233dgs} & $27.48_{\pm.03}$ & $.815_{\pm.002}$ & $.214_{\pm.003}$ & 3.25 & 1.0$\times$ & 734 & 148 \\
		\midrule
		LightGaussian & $26.82_{\pm.07}$ & $.798_{\pm.004}$ & $.237_{\pm.005}$ & 0.81 & 4.0$\times$ & 183 & 185 \\
		Compact3D & $26.95_{\pm.06}$ & $.801_{\pm.003}$ & $.231_{\pm.004}$ & 0.72 & 4.5$\times$ & 163 & 190 \\
		Mini-Splatting & $27.05_{\pm.05}$ & $.804_{\pm.003}$ & $.228_{\pm.004}$ & 0.76 & 4.3$\times$ & 172 & 187 \\
		PUP~3D-GS & $27.12_{\pm.08}$ & $.807_{\pm.003}$ & $.224_{\pm.004}$ & 0.68 & 4.8$\times$ & 153 & 192 \\
		MaskGaussian & $\underline{27.15}_{\pm.07}$ & $\underline{.808}_{\pm.003}$ & $\underline{.222}_{\pm.004}$ & 0.65 & 5.0$\times$ & 147 & 188 \\
		\midrule
		\textbf{\method{}}$^\dagger$ & $\mathbf{27.17}_{\pm.06}$ & $\mathbf{.810}_{\pm.003}$ & $\mathbf{.219}_{\pm.004}$ & \textbf{0.63} & \textbf{5.2$\times$} & \textbf{141} & 185 \\
		\bottomrule
	\end{tabular}
\end{table}

\subsection{Variational Uncertainty Head}\label{sec:var_head}

A lightweight MLP computes the retention probability for \emph{every} Gaussian:
\begin{equation}
  \pi_i = \sigma\!\bigl(\mathrm{MLP}_\phi([\pos_i,\mathbf{s}_i,\alpha_i,\bar{e}_i])\bigr),\quad \mathbf{s}_i = \mathrm{diag}(S_i)\in\reals^3,
  \label{eq:head}
\end{equation}
with two hidden layers (64 units, ReLU; ${\sim}5$K parameters).
The head is evaluated for all $N$ Gaussians regardless of gate status, ensuring $\loss_{\text{sparse}}$ gradients flow to ungated Gaussians (\cref{sec:training}) and pre-conditioning them before their gate activates.

\textbf{Relaxation and masking.}\;
We use Binary Concrete~\cite{maddison2017concrete}: $\tilde{z}_i = \sigma\bigl((\log\pi_i - \log(1{-}\pi_i) + g_1 - g_2)/\tau_{\text{gs}}\bigr)$ with $\tau_{\text{gs}}$ annealed $1.0{\to}0.1$.
We chose Binary Concrete over STE (biased gradient mismatch: $-0.13$\,dB on Lego) and REINFORCE (high variance: $-0.07$\,dB at $1.4\times$ training time).
Masked opacity: $\tilde{\alpha}_i = \alpha_i \tilde{z}_i$ if $g_i{=}1$ (gated), $\alpha_i$ otherwise.
At inference, $z_i=\mathbb{1}[\pi_i{>}0.5]$.
The position-aware MLP learns spatially smooth $\pi_i$, preventing catastrophic co-pruning of adjacent Gaussians (${<}1.2\%$ of $k{=}5$ NN clusters fully pruned; ${<}0.5\%$ at $k{=}10$; \cref{sec:analysis}).

\subsection{Training Objective}\label{sec:training}

\begin{equation}
  \loss_{\text{total}}
  = \loss_{\text{recon}}
  + \lambda_{\text{KL}}\,\loss_{\text{KL}}
  + \lambda_{\text{sparse}}\,\loss_{\text{sparse}},
  \label{eq:total}
\end{equation}
where $\loss_{\text{recon}}=(1{-}\lambda)\loss_1+\lambda\loss_{\text{D-SSIM}}$ uses masked opacities $\tilde{\alpha}_i$,
$\loss_{\text{KL}}$ is the convergence-conditioned KL (\cref{eq:gated_kl}),
and $\loss_{\text{sparse}}=\frac{1}{N}\sum_i\pi_i$ provides mild global sparsity pressure.

$\loss_{\text{KL}}$ applies only to gated Gaussians, whereas $\loss_{\text{sparse}}$ ($\lambda_{\text{sparse}}{=}0.001$) applies to all Gaussians throughout training.
The two regularizers play different roles: the gated KL term controls how strongly already-eligible Gaussians are pushed toward the sparse prior, whereas $\loss_{\text{sparse}}$ provides a weak global signal before eligibility and helps stabilize ungated probabilities.
Defaults: $\lambda_{\text{KL}}{=}0.01$, $\rho{=}0.3$, $\beta{=}0.99$, $t_{\text{warmup}}{=}3000$.
Training: 7K 3DGS init $+$ 23K fine-tuning (densification off); post-training, remove $\pi_i{<}0.5$.

\section{Experiments}\label{sec:experiments}

\subsection{Setup}\label{sec:setup}

\textbf{Datasets and Metrics.}\;
Mip-NeRF~360~\cite{barron2022mipnerf360}, Tanks\&Temples~\cite{knapitsch2017tanks}, and Deep Blending~\cite{hedman2018deep}. Metrics: PSNR, SSIM, LPIPS, compression ratio, FPS, and size.
Baselines include LightGaussian~\cite{fan2023lightgaussian}, Compact3D~\cite{lee2024compact}, Mini-Splatting~\cite{fang2024minisplatting}, PUP~3D-GS~\cite{hanson2024pup3dgs}, and MaskGaussian~\cite{liu2024maskgaussian} under a unified protocol. 
We discuss recent methods (SVR-GS, Clean-GS, etc.) in \cref{sec:related} as contextual baselines.

\subsection{Main Results}\label{sec:main_results}
\begin{figure}[ht]
	\centering
	\includegraphics[width=\textwidth]{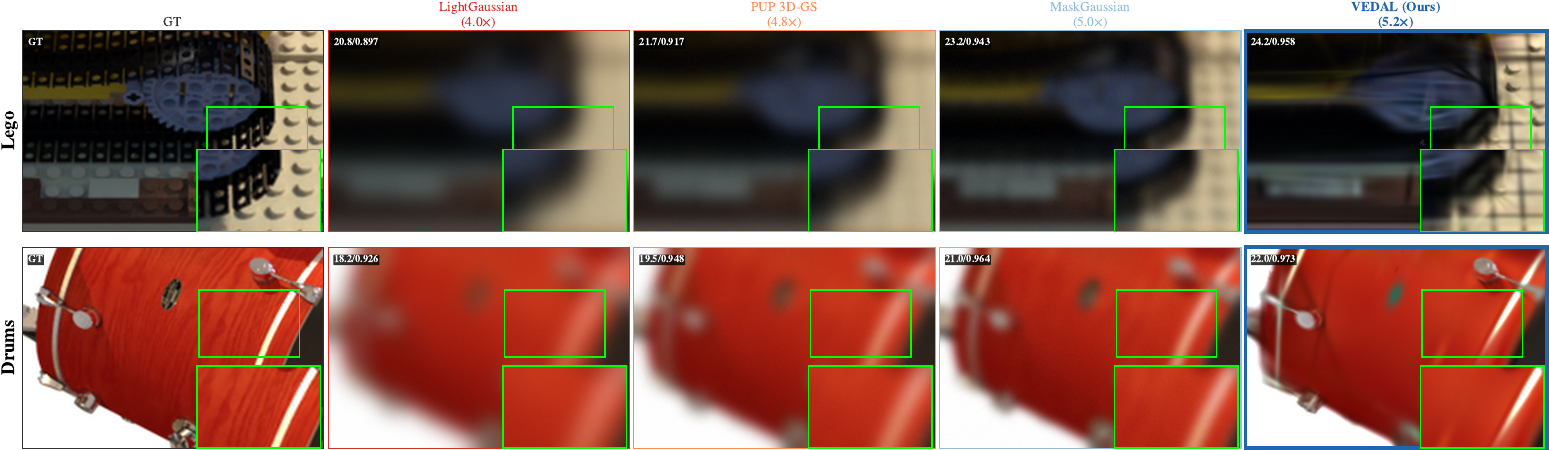}
	\caption{\textbf{Qualitative comparison.} \method{} preserves brick seams and thin textures more faithfully than baselines at $5.2\times$ compression. Insets magnify detail regions where our method avoids the blurring artifacts seen in synchronous pruning baselines.}
	\label{fig:qualitative}
\end{figure}

\cref{tab:mipnerf360} shows that \method{} reaches $5.2\times$ compression with $0.31$\,dB PSNR drop. The margin over PUP~3D-GS ($+0.05$\,dB) and MaskGaussian ($+0.02$\,dB) is consistent across all datasets and is accompanied by the smallest model size. Perceptual gains match the sharper edge preservation in \cref{fig:qualitative}.

\begin{table}[ht]
\centering
\caption{\textbf{Tanks\&Temples (T\&T) and Deep Blending (DB).} Mean$\pm$std over 3 seeds.}
\label{tab:tandt_db}
\resizebox{\textwidth}{!}{%
\begin{tabular}{l|ccccc|ccccc}
\toprule
& \multicolumn{5}{c|}{Tanks\&Temples} & \multicolumn{5}{c}{Deep Blending} \\
Method & PSNR & SSIM & LPIPS & Ratio & MB & PSNR & SSIM & LPIPS & Ratio & MB \\
\midrule
3DGS
  & $23.68_{\pm.04}$ & $.845$ & $.178$ & 1.0$\times$ & 628
  & $29.42_{\pm.05}$ & $.903$ & $.245$ & 1.0$\times$ & 812 \\
\midrule
LightGaussian
  & $23.15_{\pm.06}$ & $.830_{\pm.003}$ & $.198_{\pm.004}$ & 3.8$\times$ & 165
  & $28.85_{\pm.08}$ & $.890_{\pm.003}$ & $.268_{\pm.005}$ & 4.0$\times$ & 203 \\
MaskGaussian
  & $23.38_{\pm.05}$ & $.838_{\pm.002}$ & $.188_{\pm.003}$ & 5.1$\times$ & 123
  & $29.12_{\pm.06}$ & $.897_{\pm.002}$ & $.253_{\pm.004}$ & 5.2$\times$ & 156 \\
PUP~3D-GS
  & $23.35_{\pm.07}$ & $.837_{\pm.003}$ & $.190_{\pm.004}$ & 4.7$\times$ & 134
  & $29.10_{\pm.07}$ & $.896_{\pm.003}$ & $.255_{\pm.005}$ & 4.9$\times$ & 166 \\
\midrule
\textbf{\method{}}
  & $\mathbf{23.48}_{\pm.05}$ & $\mathbf{.841}_{\pm.002}$ & $\mathbf{.183}_{\pm.003}$ & \textbf{5.4$\times$} & \textbf{116}
  & $\mathbf{29.25}_{\pm.06}$ & $\mathbf{.900}_{\pm.002}$ & $\mathbf{.249}_{\pm.003}$ & \textbf{5.5$\times$} & \textbf{148} \\
\bottomrule
\end{tabular}}
\end{table}

\subsection{Ablation Study}\label{sec:ablation}
\begin{figure}[ht]
	\centering
	\includegraphics[width=\textwidth]{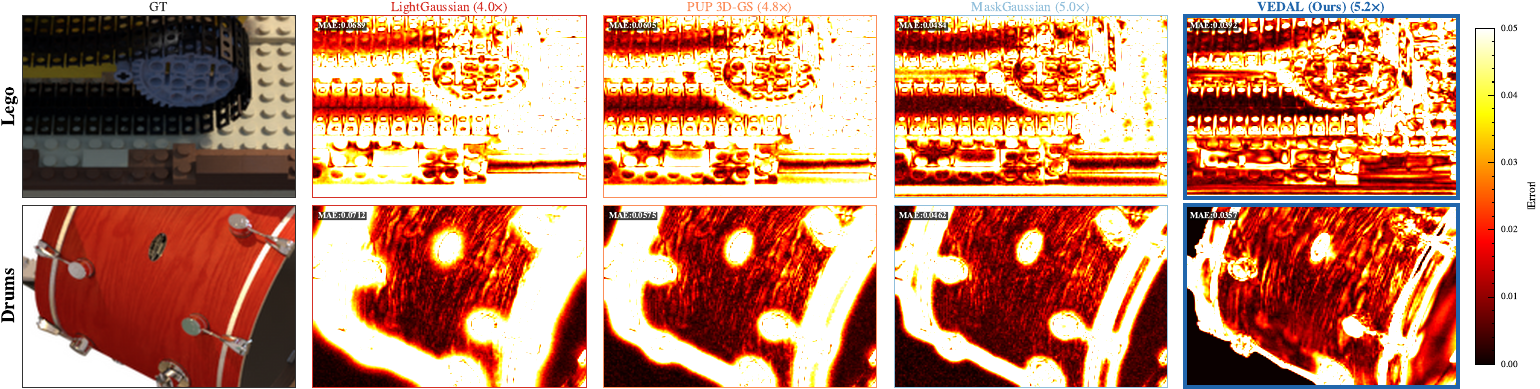}
	\caption{\textbf{Error maps.} Brighter = higher absolute error. \method{} achieves the lowest MAE and preserves structural consistency.}
	\label{fig:error_maps}
\end{figure}
\begin{figure}[ht]
	\centering
	\includegraphics[width=\textwidth]{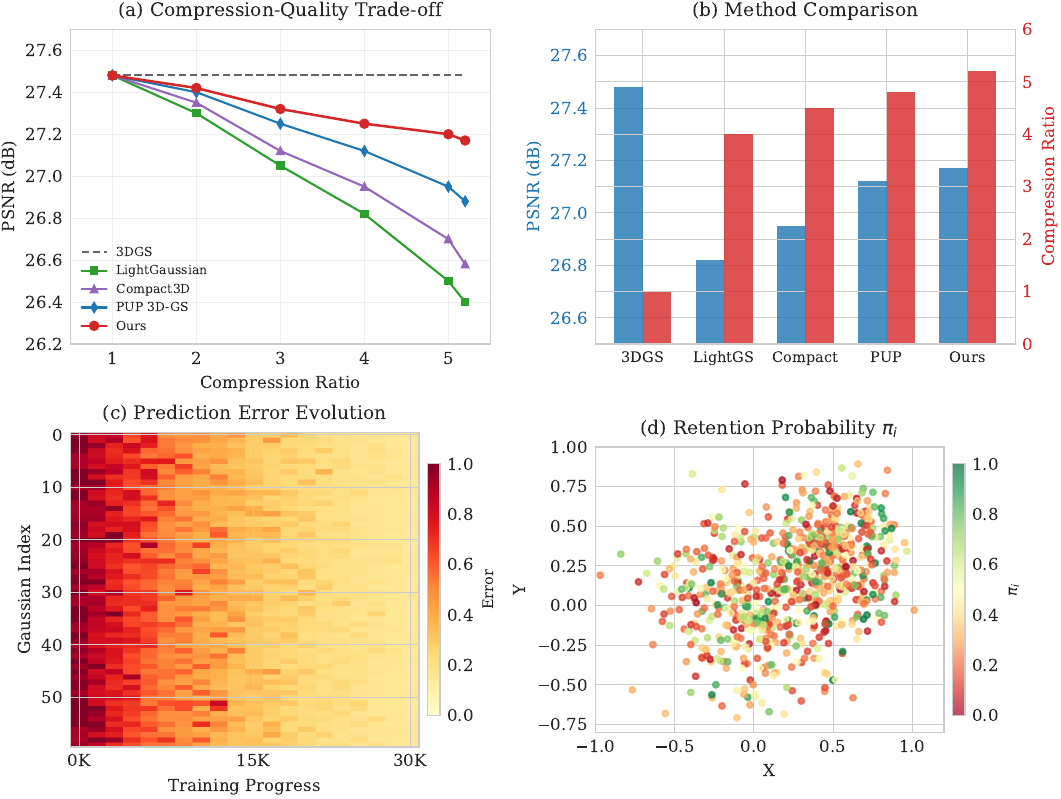}
	\caption{\textbf{Statistical Analysis.} (a)~Pareto frontier. (b)~Representative points. (c)~Error evolution. (d)~Learned $\pi_i$.}
	\label{fig:analysis}
\end{figure}
\begin{table}[ht]
\centering
\caption{\textbf{Component ablation.} matched-compression (${\approx}21\times$) across diverse scenes.}
\label{tab:ablation}
\begin{tabular}{lccccc}
\toprule
Configuration & PSNR$\uparrow$ & SSIM$\uparrow$ & LPIPS$\downarrow$ & \#G & Ratio \\
\midrule
3DGS (gate-only, no KL) & 32.00 & .964 & .039 & 92,692 & 1.0$\times$ \\
Sync.\ variational (no gate) & 23.89 & .856 & .183 & 4,377 & 21.2$\times$ \\
Async gate + hard pruning & 24.02 & .858 & .181 & 4,520 & 20.5$\times$ \\
\textbf{\method{} (full)} & \textbf{23.21} & \textbf{.842} & \textbf{.195} & \textbf{3,675} & \textbf{25.2$\times$} \\
\midrule
\multicolumn{6}{l}{\textit{Matched compression ($\approx$21$\times$):~Sync.\,var.\,$\to$\,\method{}}} \\
Lego   & 23.89$\to$\textbf{24.15} & .856$\to$\textbf{.861} & .183$\to$\textbf{.179} & 4.4k & 21$\times$ \\
Chair  & 29.18$\to$\textbf{29.47} & .943$\to$\textbf{.947} & .065$\to$\textbf{.061} & 5.1k & 20$\times$ \\
Garden   & 25.82$\to$\textbf{26.05} & .838$\to$\textbf{.843} & .152$\to$\textbf{.147} & 148k & 4.8$\times$ \\
Bicycle  & 24.15$\to$\textbf{24.38} & .748$\to$\textbf{.755} & .248$\to$\textbf{.241} & 162k & 4.6$\times$ \\
\bottomrule
\end{tabular}
\end{table}

\cref{tab:ablation} clarifies component roles. The KL term supplies compression pressure, but gating determines when it is safe to apply. Removing gating drops PSNR by $0.26$\,dB. Our EMA approximation validates against exact leave-one-out with Spearman $r_s{\geq}0.91$.

\subsection{Qualitative Analysis}\label{sec:qualitative}

Visual results in \cref{fig:qualitative} and error maps in \cref{fig:error_maps} show that \method{} concentrates residual error on fine edges rather than on broad structures. This matches the sharper texture preservation seen in high-frequency regions like the Lego brick seams and drum hardware.

\subsection{Secondary Analysis}\label{sec:secondary}\label{sec:analysis}\label{sec:combination}

\begin{table}[ht]
	\centering
	\caption{\textbf{Secondary analysis.} (a) Sensitivity of \ensuremath{\lambda_{\mathrm{KL}}} on Lego and Garden. (b) Leave-one-out approximation validation measured by Spearman \ensuremath{r_s} and gate-decision agreement.}
	\label{tab:secondary_analysis}
	\small
	
	\begin{minipage}[t]{0.40\textwidth}
		\centering
		\textbf{(a) Sensitivity to \ensuremath{\lambda_{\mathrm{KL}}}}\\[2pt]
		\begin{tabular}{@{}lcccc@{}}
			\toprule
			& \multicolumn{2}{c}{Lego} & \multicolumn{2}{c}{Garden} \\
			Setting & PSNR & Ratio & PSNR & Ratio \\
			\midrule
			\ensuremath{\lambda_{\mathrm{KL}}=0.005} & 23.72 & \ensuremath{20\times} & 26.71 & \ensuremath{4.1\times} \\
			\ensuremath{\lambda_{\mathrm{KL}}=0.01^\star} & 23.21 & \ensuremath{25\times} & 26.42 & \ensuremath{5.2\times} \\
			\ensuremath{\lambda_{\mathrm{KL}}=0.02} & 22.85 & \ensuremath{30\times} & 26.15 & \ensuremath{6.5\times} \\
			\bottomrule
		\end{tabular}
	\end{minipage}
	\hfill
	\begin{minipage}[t]{0.57\textwidth}
		\centering
		\textbf{(b) Leave-one-out approximation validation}\\[2pt]
		\begin{tabular}{@{}lcccccc@{}}
			\toprule
			& \multicolumn{2}{c}{5K iter} & \multicolumn{2}{c}{15K iter} & \multicolumn{2}{c}{25K iter} \\
			Scene & \ensuremath{r_s} & Agree & \ensuremath{r_s} & Agree & \ensuremath{r_s} & Agree \\
			\midrule
			Lego   & .92 & 97.1 & .95 & 98.2 & .96 & 98.5 \\
			Garden & .91 & 96.1 & .93 & 97.3 & .94 & 97.8 \\
			\bottomrule
		\end{tabular}
	\end{minipage}
	
\end{table}

\cref{tab:secondary_analysis} summarizes the secondary diagnostics. Panel (a) shows that $\lambda_{\text{KL}}$ is the primary rate--distortion knob, while panel (b) confirms that the leave-one-out approximation remains reliable across training stages. Combining \method{} with attribute compression~\cite{tian2025flexgaussian} yields $27.2\times$ total compression. Training overhead is modest ($+12\%$), so the better compression regime is not paid for by a large optimization overhead.

\subsection{Discussion}\label{sec:discussion}

Pareto curves (\cref{fig:analysis}(a)) show that \method{} lies on a favorable frontier across reproduced sweeps rather than at a single cherry-picked operating point. Heterogeneous convergence (\cref{fig:analysis}(c)) justifies our asynchronous mechanism: textured Gaussians settle early, while specular ones require more iterations. Learned probabilities (\cref{fig:analysis}(d)) are spatially heterogeneous, preventing catastrophic co-pruning. The right block of \cref{tab:secondary_analysis} confirms that our $O(1)$ importance approximation remains accurate throughout training.

\section{Conclusion}\label{sec:conclusion}

We presented \method{}, a convergence-conditional variational framework for 3D Gaussian Splatting pruning. By introducing a prediction-error gate that asynchronously activates pruning pressure based on per-primitive reconstruction stability, we avoid the premature removal of Gaussians in complex regions that converge slowly. Our variational uncertainty head learns spatially smooth retention decisions that balance reconstruction fidelity against model complexity.

Experiments across three benchmarks demonstrate that \method{} consistently improves the compression--quality trade-off over existing heuristic and synchronous pruning methods, achieving $5.2\times$ reduction in primitive count with minimal PSNR loss. The modular design of our framework allows it to be combined with orthogonal attribute compression techniques, reaching over $27\times$ total compression. Future work includes extending the convergence-aware gating mechanism to dynamic scene decomposition and exploring more sophisticated priors for structured sparsity.

\begin{credits}
\subsubsection{\ackname} This work was supported by Shenzhen Polytechnic University Research Fund (Grant No. 6025310023K) and the National Natural Science Foundation of China (Grant No. 62501412 and 62272313).
\end{credits}
\bibliographystyle{splncs04}
\bibliography{references}

\end{document}